\documentclass[letterpaper, 10 pt, conference]{ieeeconf}

\IEEEoverridecommandlockouts                           
\usepackage{graphics}
\usepackage{epsfig} 

\usepackage{amsmath} 
\usepackage{amssymb} 

\usepackage[ruled, linesnumbered]{algorithm2e}
\usepackage{subfigure}
\usepackage{amsthm}
\usepackage{stfloats}
\usepackage{adjustbox}

\usepackage{cite}
\usepackage{multirow}

\usepackage{hyperref}
\hypersetup{
    colorlinks=true,
    linkcolor=blue,      
    urlcolor=blue
}

\newtheorem{thm}{Theorem}

\newcommand{\argmin}{\mathop{\mathrm{argmin}}}

\title{\LARGE \bf
Certifiable Safe Model-Based Reinforcement Learning with Control-Affine Dynamics Approximation}

\author{
Hao Zhou$^1$, Yanze Zhang$^1$, Cameron Reid$^1$, and Wenhao Luo$^2$
\thanks{$^*$This work was supported in part by the U.S. National Science Foundation under Grant 2530297.}
\thanks{
$^1$Authors are with the Department of Computer Science, University of Illinois Chicago, Chicago, IL 60607, USA.
Email: {\tt\small \{hzhou134, yzhan361, creid22\}@uic.edu}
}
\thanks{
$^2$The author is with the Department of Computer Science and Engineering, Texas A\&M University, College Station, TX 77843, USA. \texttt{wenhaol@tamu.edu}
}
}

\begin{document}

\maketitle
\thispagestyle{empty}
\pagestyle{empty}


\begin{abstract}
Safe model-based reinforcement learning (RL) often bridges control-theoretic analysis and RL for robots to safely explore (partially) unknown system dynamics while deriving control actions for task efficiency. The control performance and safety assurance typically rely on prior knowledge of partially modeled nominal system dynamics and the data-driven models that compensate for residual model uncertainties. However, existing methods often overlook the structure of residual model uncertainties (e.g., components affine in control), which could lead to overly conservative robot behaviors or invalid safety guarantees under the safe learning-based controllers. This paper proposes a safe reinforcement learning framework that learns control-affine dynamics with a certifiable data-driven safe policy using control barrier functions (CBF). Specifically, we first use Control-Affine Random Fourier Features (ARFF) to model robot dynamics in a control-affine form, which offers computational efficiency that scales with dataset size and reduces potential model bias for model-based reinforcement learning. Then, a model-free, efficient uncertainty quantification method using adaptive conformal prediction (ACP) is applied to quantify the uncertainty in the safety constraint arising from the learned control-affine dynamics. This allows for data-driven safety assurance amenable to principled and efficient controller synthesis with CBF. Simulation results on the cartpole and the 3D quadrotor platforms demonstrate the effectiveness of the proposed framework.
\end{abstract}

\section{Introduction}

Reinforcement Learning (RL) has been successfully applied to various robotics tasks \cite{chowdhury2019online,chua2018deep,curi2020efficient,deisenroth2011pilco,kamthe2018data}, in which a robot learns to perform a task optimally through interaction with the environment. In many real-world applications, however, the environmental uncertainty and limited prior information about the robot's dynamics could easily render unsafe behaviors during learning. This could often lead to catastrophic consequences and thus calls for safe reinforcement learning (safe RL) \cite{brunke2022safe} that trains a policy to maximize task efficiency while ensuring safety during learning and deployment. 

Safe RL is usually achieved under the model-based reinforcement learning (MBRL) framework \cite{luo2022sample, berkenkamp2017safe} or model-free reinforcement learning (MFRL) framework \cite{cheng2019end} with the addition of the safety constraint. MBRL first learns the dynamics and then uses them to obtain the policy, making it more sample-efficient than MFRL. Apart from the sample efficiency of the MBRL, the dynamics obtained from the MBRL can be embedded into the optimal control framework (e.g., model predictive control\cite{kamthe2018data, chua2018deep}) or the safety filter method (e.g., control barrier function \cite{wang2018safe, zhou2024safety,cheng2019end}) to guarantee safety. Due to the sample efficiency and the convenience of constructing optimization problem using the learned dynamics for the safety guarantee, MBRL will be adopted in this paper as the fundamental framework.

Gaussian Processes (GPs) \cite{schulz2018tutorial}, as a data-driven model for learning dynamics, have been widely used in MBRL \cite{curi2020efficient, wang2018safe, berkenkamp2017safe} since they can provide both mean and variance of the prediction for uncertainty-aware decisions, e.g., using variance to quantify uncertainty of the safety constraint for safe exploration \cite{curi2020efficient}. However, GPs incur a high computational burden during training, which limits their applicability to online control. Approximating the GPs via the Random Fourier Feature (RFF) or Sparse GPs is computationally efficient, but this potentially provides the biased prediction \cite{potapczynski2021bias} and hence affects the performance of the MBRL. 

Apart from modeling the dynamics, safe MBRL aims to obtain a safe policy, e.g., through control barrier functions (CBF) \cite{zhou2025computationally, luo2022sample, emam2022safe, cheng2019end} that enforce safety-critical control constraints while learning the system dynamics.
However, deriving effective CBF constraints with learned dynamics often rely on the assumption that the control-affine component in the dynamics is fully known and the unmodelled residual part depends only on the state \cite{wang2018safe, cheng2019end, luo2022sample}. 
This restrictive assumption could significantly limit the applicability of the resulting policy, introduce a bias in the learned dynamics, and may yield invalid safety performance.

On the other hand, approximating GPs using RFF usually underestimates uncertainty (variance) \cite{li2024trigonometric}, leading to an overconfident uncertainty-aware safety constraint and, consequently, failing to guarantee safety. To effectively calibrate uncertainty, statistical methods such as conformal prediction (CP) \cite{papadopoulos2002inductive} can be used to quantify the predictive uncertainty, thus enabling robust safe control \cite{lindemann2023safe}. 
Although CP can quantify uncertainty without depending on the predictive models, it assumes the testing data is exchangeable with those in a pre-collected calibration dataset, which can be violated easily in the online setting where robots continuously collect data and refine the learned model. 

To address these challenges, 
in this paper we propose a novel safe RL approach for a robot to safely learn and approximate its control-affine dynamics with adaptive recalibrated uncertainty for certifiable high-probability safety assurance during learning and deployment.
We will first leverage the control-affine random fourier features (ARFF) \cite{kazemian2024random} originally developed for learning control Lyapunov function to characterize the data-driven system dynamics and the control barrier functions without biasing the control-affine structure.
Subsequently, we use adaptive conformal prediction (ACP) \cite{dixit2023adaptive} to quantify the uncertainty when enforcing the learned safety constraints as robots collect data during learning in an online setting. 
Moreover, we integrate the learned control-affine dynamics and the uncertainty-aware CBF into an MBRL framework with optimism-based exploration, enabling safe exploration and achieving near-optimal control performance. Our \textbf{contributions} are threefold: (1) We incorporate a control-affine approximation approach within an MBRL framework to explicitly preserve the control-affine structure when learning the unknown system dynamics and constructing data-driven CBF-based safe control constraints.
(2) We develop robustified safe control constraints by accounting for predictive uncertainty via ACP and provide theoretical analysis to justify the certifiable safety performance. 
(3) We present simulation results on the cartpole and the quadrotor platforms to demonstrate the effectiveness of our proposed method against the baselines in terms of control performance and safety assurance.

\begin{figure}[t]
	\centering
		\centering
		\includegraphics[scale=1, width=1.0\linewidth]{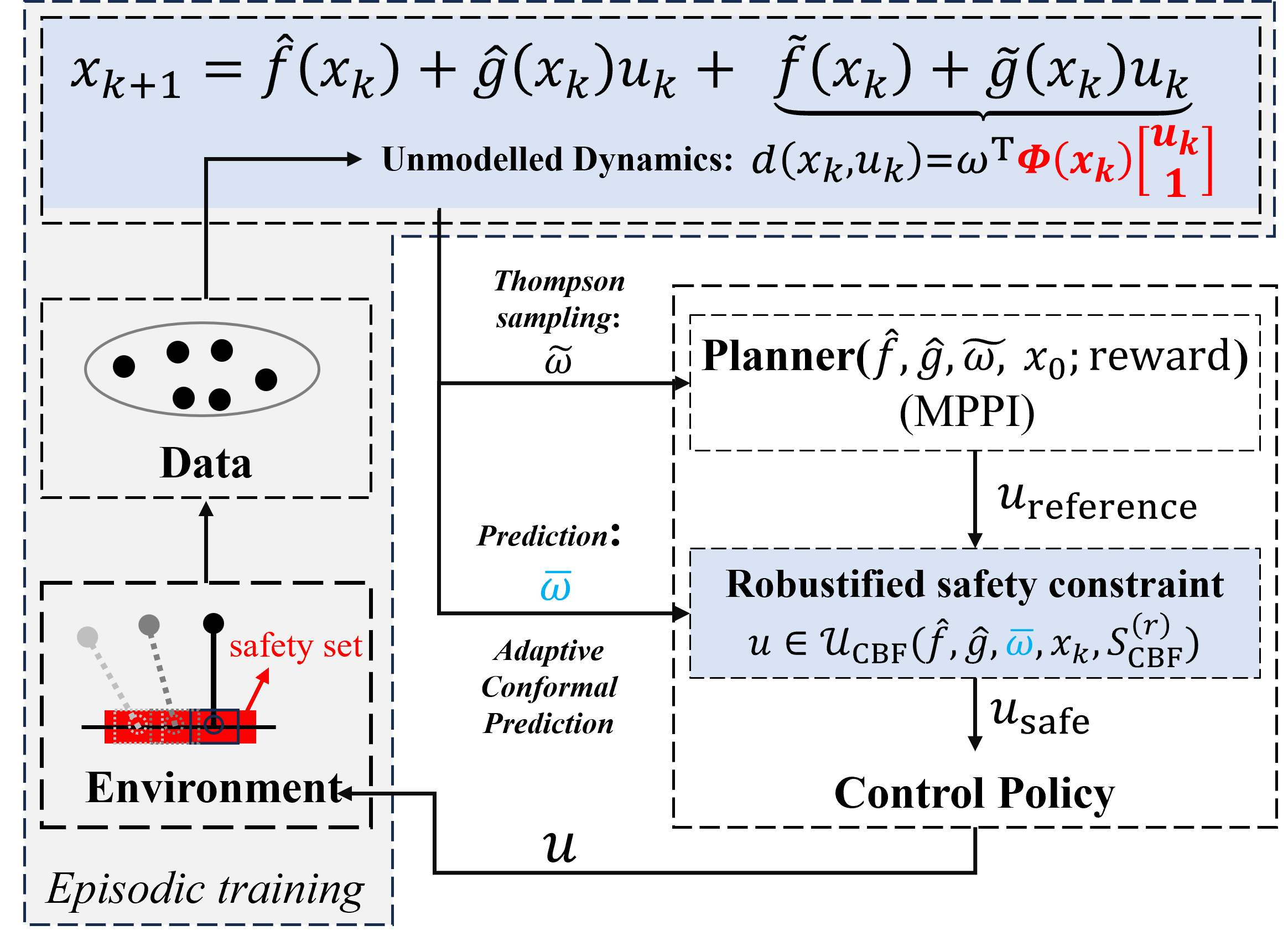}
	\caption{The episodic framework for safe model-based reinforcement learning using control-affine dynamics approximation. 
    For each episode, model predictive path integral (MPPI) as a model-based planner \cite{williams2018information,williams2017information} uses sampled dynamics via Thompson sampling to generate the reference control input. Then, uncertainty-aware safety constraint based on adaptive conformal prediction is leveraged to guarantee safety.}
	\label{fig: framework}
    \vspace{-0.6cm}
\end{figure}

\section{Preliminary}
\subsection{Safe Learning Objective}
In this paper, we investigate safe model-based reinforcement learning due to its sample efficiency compared to model-free reinforcement learning. The unknown dynamics required to be learned under model-based reinforcement learning is defined as,
\begin{equation}\label{eq: dyn-d}
    x_{k+1} = \hat{f}(x_k) + \hat{g}(x_k)u_k + d(x_k, u_k)
\end{equation}
where $x_k\in \mathbb{R}^n$ and $x_{k+1}\in \mathbb{R}^n$ are the robot states at current and next time steps. $u_k\in\mathbb{R}^m$ is the robot control input and $d: \mathbb{R}^n\times\mathbb{R}^m\mapsto\mathbb{R}^n$ is the unknown dynamics that must be learned. $\hat{f}(x_k) + \hat{g}(x_k)u_k$ is the control-affine nominal robot dynamics with $\hat{f}:\mathbb{R}^n\mapsto\mathbb{R}^n$ and $\hat{g}:\mathbb{R}^n\mapsto\mathbb{R}^{n\times m}$, which can be derived from the underlying physical laws.

Control-affine dynamics are widely used in robotics, such as in robotic manipulators, cartpoles, and quadrotors. Previous work\cite{wang2018safe, curi2020efficient, cheng2019end} leveraged Gaussian Process or neural networks to learn unknown dynamics $d(x_k, u_k)$ with state-only input $x_k$, which ignores the control-affine form of $d(x_k, u_k):=\tilde{f}(x_k)+\tilde{g}(x_k)u_k$ where $\tilde{f}:\mathbb{R}^n\mapsto\mathbb{R}^n$, $\tilde{g}:\mathbb{R}^n\mapsto\mathbb{R}^{n\times m}$. Without considering the control-affine structure in $d(x_k, u_k)$, the safe policy obtained by CBF under the reinforcement learning framework may not be solved efficiently and optimally. We discuss how to learn the control-affine component $d(x_k, u_k)$ in section \ref{sec: control affine d} and how to embed it into the safe reinforcement learning framework in \ref{sec: algorithm framework and analysis}.

To learn unknown control-affine dynamics $d(x_k, u_k)$ while guaranteeing the primary control task, a widely used approach \cite{luo2022sample, kakade2020information} is to learn $d(x_k, u_k)$ by optimizing the objective function below,

\begin{equation}\label{eq: learning objective}
    \min_{\pi \in \Pi} \min_d J^{\pi}(x_0; R, d) = \min_{\pi \in \Pi} \min_{d} \mathbb{E} [\sum_{k=0}^{K-1} R(x_k, u_k) | \pi, x_0, d]
\end{equation}

where $R\in \mathbb{R}^{+}$ and $\pi$ represent the stage cost function and the control input, respectively. The total time horizon within each learning episode is $K$. 
To guarantee safety, the safety constraint is embedded into the Eq.~\eqref{eq: learning objective} as follows
\begin{equation}\label{eq: safe learning opt}
	\begin{aligned}
		\min_{\pi \in \Pi} \min_{d} J^{\pi}(x_0; R, d) 
		\quad s.t. \: \mathbb{P}(c(x_k, u_k) \geqslant 0) \geqslant 1-\alpha
	\end{aligned}
\end{equation} 

where $c:\mathbb{R}^n \times \mathbb{R}^m \mapsto \mathbb{R}$ is the safety constraint function with 
$c(x_k, u_k)\geqslant 0$ defining safety condition. We model $c(x_k,u_k)$ as a chance constraint because of the epistemic uncertainty from the learning model of $d(x_k, u_k)$. $\alpha \in (0, 1)$ is the user-defined failure probability bound of the safety constraint.

\subsection{Control Barrier Functions}
Discrete-time Control Barrier Function (DT-CBF) has already been successfully applied to reinforcement learning to guarantee safety \cite{cheng2019end, zhou2024safety}. In this paper, we use the DT-CBF as a safety constraint to achieve safe reinforcement learning, as it renders the predefined safe set forward invariant. The DT-CBF under the deterministic system is summarized below.

\textbf{Lemma. 1}
\label{lemma: DT-CBF}
    (Discrete-Time Control Barrier Function (DT-CBF) \cite{agrawal2017discrete})  
    Given the deterministic dynamics $\bar{x}_{k+1}=f(\bar{x}_k)+g(\bar{x}_k)u_k$ where $f:\mathbb{R}^n\mapsto\mathbb{R}^n$ and $g:\mathbb{R}^n\mapsto\mathbb{R}^{n\times m}$, the DT-CBF ensures that the system state remains within a safe set $\mathcal{C} \subset \mathbb{R}^n$, which is defined as,  
    \begin{equation}
        \mathcal{C} \triangleq \{\bar{x}_k \in \mathbb{R}^n \mid h(\bar{x}_k) \geqslant 0\}
    \end{equation}  
    Here $h: \mathbb{R}^n \to \mathbb{R}$ is a continuous safety constraint function. If the initial state satisfies $x_0 \in \mathcal{C}$, then the system state remains in $\mathcal{C}$ for all future time steps, provided that the control input $u_k$ belongs to the set  
    \begin{equation}
        \mathcal{H}(\bar{x}_k) = \{u_k \mid h(f(\bar{x}_k)+g(\bar{x}_k)u_k)) \geqslant (1-\gamma) h(\bar{x}_k) \}
    \end{equation}  
    where $\gamma \in (0, 1]$. $h(f(\bar{x}_k)+g(\bar{x}_k)u_k)) \geqslant (1-\gamma) h(\bar{x}_k)$ is named as safety barrier certificates that ensures the forward invariance of the safe set $\mathcal{C}$. DT-CBF still requires the dynamics to be control-affine in order to solve the optimization problem efficiently \cite{agrawal2017discrete}.

\subsection{Adaptive Conformal Prediction}\label{sec: acp}
Conformal Prediction (CP) \cite{papadopoulos2002inductive} is a statistical method to obtain a confidence interval $\mathcal{I}$ with no dependence on the prediction model $G:\mathbb{R}^n \mapsto\mathbb{R}$ under the exchangeability assumption of the dataset. Given a testing input $x^*$, CP can guarantee the marginal probability as below,
\begin{equation}\label{eq: cp}
    \mathbb{P}(G(x^*) \in \mathcal{I}) \geqslant 1-\alpha^c
\end{equation}
where $\alpha^c \in (0,1)$ is the pre-defined failure probability bound. The confidence interval $\mathcal{I}$ is constructed by a quantile number $S^{(r)}\in \mathbb{R}^{+}$ and hence $\mathcal{I}=[G(x^*)-S^{(r)}, G(x^*)+S^{(r)}]$. The quantile number $S^{(r)}$ is obtained by the nonconformity score (similar to the residual error in machine learning) computed in the dataset.
$r:= \lceil (\overline{K}+1)(1-\alpha^c) \rceil$ where $\overline{K}$ is the size of the dataset and $\lceil \cdot \rceil$ is the ceiling function.

\textbf{Adaptive Conformal Prediction (ACP).} CP relies on the assumption of data exchangeability \cite{papadopoulos2002inductive}, which may be easily violated in dynamical time series prediction tasks \cite{gibbs2021adaptive}. To achieve reliable dynamical time series prediction over long time horizons, ACP \cite{gibbs2024conformal, dixit2023adaptive} was proposed to dynamically update the quantile number online. The estimating process is shown below:
	\begin{equation}
		\label{eq: ACP update}
		\alpha_{k+1} = \alpha_{k} + \delta (\alpha^c-e_k) \; \mathrm{with}\; e_k=
		\begin{aligned}
			\begin{cases}
				 1, \mathrm{if}\; S_k^{(r)} < S_{k}, \\
				 0, \mathrm{if}\; S_k^{(r)} \geqslant S_{k}.
			\end{cases}
		\end{aligned}
	\end{equation}
where $S_k \in \mathbb{R}^{+}$ denotes the nonconformity score at time step $k$, and $S_k^{(r)}$ is the corresponding quantile.  If $S_k \leq S_k^{(r)}$, then $\alpha_{k+1}$ is increased, resulting in a larger prediction region $\mathcal{I}$ induced by $S_k^{(r)}$. $\delta$ is the user-specified learning rate.

This paper will use ACP to quantify the uncertainty of the chance constraint (Eq.~\eqref{eq: safe learning opt}) and then derive an uncertainty-aware safety constraint to solve Eq.~\eqref{eq: safe learning opt}. The uncertainty quantification method proposed in this paper is simple, easy to implement, and computationally inexpensive, as discussed in sections \ref{sec: uncertainty quantification} and \ref{sec: algorithm framework and analysis}.

\section{Method}
\subsection{Learning Unknown Control-Affine Dynamics}\label{sec: control affine d}
To learn the unknown dynamics $d(x_k, u_k)\in \mathbb{R}^n$ under the safe learning optimization problem (Eq.\eqref{eq: safe learning opt}), Gaussian Processes have been widely used to model unknown dynamics in this setting \cite{curi2020efficient, wang2018safe} because they naturally provide prediction variance, which can be used both to efficiently explore the environment and to formulate uncertainty-aware safety constraints. However, the Gaussian Processes for $d(x,u)$ do not scale well to large datasets, making training the model computationally intensive. Apart from the computation, \cite{curi2020efficient, wang2018safe} ignores the control-affine form of the unknown dynamics and therefore may sacrifice performance due to intrinsic model error (not control-affine $d(x_k, u_k)$).

To efficiently model $d(x,u)$ by exploiting control-affine structure, we will use control-Affine Random Fourier Features (ARFF) to model the unknown dynamics $d(x,u)$. Motivated by \cite{kazemian2024random}, the ARFF for $d(x,u)$ is defined as,
\begin{equation}\label{eq: d}
    d(x_k, u_k) = \omega^\top \phi(x_k, u_k)
\end{equation}
where $\omega \in \mathbb{R}^{P \times n}$ represents the weight matrix associated with the linear model. $P$ is the number of features and $n$ is the dimension of the robot state. $\phi(x_k, u_k): \mathbb{R}^n \times \mathbb{R}^m \mapsto \mathbb{R}^P$ is the ARFF, which has the following form,
\begin{equation}\label{:eq: affine RFF}
    \phi(x_k, u_k)=[\varphi_1(x_k),\cdots,\varphi_{m+1}(x_k)]\begin{bmatrix}
u_k \\
1
\end{bmatrix}
\end{equation}

Here $\varphi_i(x_k):\mathbb{R}^n \mapsto \mathbb{R}^P$ is the robot state-based random Fourier function (RFF) defined as,

\begin{equation} \label{eq: RFF}
\begin{aligned}
    \varphi_i(x_k) =
    \sqrt{\frac{2}{P}}[\sin{\theta_{i, 1}^\top x_k} \;, \cos{\theta_{i, 1}^\top x_k} \;, \cdots ,\\
    \sin{\theta_{i, \frac{P}{2}}^\top x_k} \; , \cos{\theta_{i, \frac{P}{2}}^\top x_k}]^\top
\end{aligned}
\end{equation}

where $i=1, \cdots, m+1$ and the weight $\{\theta_{i, j}\}_{j=1}^{P/2}$ is drawn i.i.d from the Gaussian distribution.

The training data, collected through safe interaction with the environment within a model-based reinforcement learning framework, are used to train Eq.~\eqref{eq: d}. Unlike the computationally intensive approach based on Gaussian Processes, the parameter $\omega$ in Eq.~\eqref{eq: d} can be obtained analytically via ridge regression, whose computational cost does not scale with the dataset size. In addition to the computational benefits, the use of ARFF also allows for efficiently solving the safe reinforcement learning optimization problem (Eq.\eqref{eq: safe learning opt}). Further details are provided in section \ref{sec: calibrate model}.

ARFF is applied in \cite{kazemian2024random} for learning control Lyapunov functions under hand-collected data. In contrast to \cite{kazemian2024random}, this paper focuses on learning unknown dynamics within a safe reinforcement learning framework, and thus, the data can be automatically obtained.

\subsection{Uncertainty Quantification for Safety Constraint} \label{sec: uncertainty quantification}
To address safety constraints in safe reinforcement learning (Eq.\eqref{eq: safe learning opt}), the CBF will be applied to guarantee safety. However, the epistemic uncertainty of the unknown dynamics $d$ invalidates the safety guarantee under the CBF because the CBF is often defined under known deterministic dynamics. On the other hand, unlike the uncertainty calibration of the Gaussian process applied to quantify the uncertainty of $d$, this paper leverages ARFF under the regression model(Eq.\eqref{eq: d}) to approximate the Gaussian Process for computational efficiency, and hence the uncertainty estimation from the regression model is usually underestimated\cite{potapczynski2021bias}. The underestimated uncertainty used to quantify the safe constraint may not guarantee safety. 

To address the underestimated uncertainty in the regression model (Eq. \eqref{eq: d}), we use adaptive conformal prediction to quantify the uncertainty, which is online, computationally efficient, and yields a reliable confidence interval that does not depend on the model choice.

Let $G(x_k, u_k)\triangleq h(\hat{f}(x_k) + \hat{g}(x_k)u_k+d(x_k, u_k)) - (1-\gamma)h(x_k)$, where $G(x_k, u_k)$ denotes the predicted barrier certificates value. $d(x_k, u_k)$ is a control-affine surrogate model learned through safe exploration, which captures the unknown dynamics.

Let $G^{*}(x_k, u_k) \triangleq h(f(x_k) +g(x_k)u_k)) - (1-\gamma)h(x_k)$ represent the true value of prediction model $G(x_k, u_k)$. From Eq.(\ref{eq: cp}) in section \ref{sec: acp}, we know that,
\begin{align}\label{Eq: prob B ACP expand}
    \mathbb{P}( -S^{(r)} + G(x_k, u_k) \leqslant G^{*}(x_k, u_k) \leqslant S^{(r)} &+ G(x_k, u_k) ) \notag \\
    &\geqslant 1-\alpha
\end{align}
where $S^{(r)}$ denotes the quantile of the nonconformity score set $\mathcal{S}$, which consists of the nonconformality scores $S_k=|G^{*}(x_k, u_k) - G(x_k,u_k)|$ computed at each time step.  

\begin{thm}\label{thm}
Suppose there exists a failure probability $\alpha \in (0, 1)$. Consider the system dynamics in Eq.~\eqref{eq: dyn-d} and a learning horizon of length $K$. Let $S^{(r)}$ denote the corresponding quantile of the nonconformity score, where $r:= \lceil (K+1)(1-\alpha^c) \rceil$. Then, enforcing Eq.~\eqref{eq: safe learning opt} with a probability guarantee of $1-\alpha$ yields the following constraint:
    \begin{equation} \label{eq: prob B to B}
        \mathbb{P}(G^{*}(x_k, u_k)\geqslant 0) \geqslant 1-\alpha \Rightarrow -S^{(r)} + G(x_k, u_k) \geqslant 0
    \end{equation}
\end{thm}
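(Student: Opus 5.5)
The plan is to read the implication in Eq.~\eqref{eq: prob B to B} in its intended direction: a sufficient condition. Enforcing the deterministic, tightened constraint $-S^{(r)} + G(x_k,u_k) \geqslant 0$ should guarantee the chance constraint $\mathbb{P}(G^{*}(x_k,u_k)\geqslant 0)\geqslant 1-\alpha$. That chance constraint in turn gives the safety requirement of Eq.~\eqref{eq: safe learning opt} with $c(x_k,u_k)=G^{*}(x_k,u_k)$. The whole argument is an event-inclusion argument built on the conformal guarantee Eq.~\eqref{Eq: prob B ACP expand}, with $\alpha^c$ identified with $\alpha$ and the quantile index $r=\lceil (K+1)(1-\alpha^c)\rceil$ computed over the $K$ nonconformity scores collected in the learning horizon.

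\textbf{Step 1: the conformal event.} Define
\[
E_k \triangleq \{\, G(x_k,u_k)-S^{(r)} \leqslant G^{*}(x_k,u_k) \leqslant G(x_k,u_k)+S^{(r)} \,\}.
\]
By Eq.~\eqref{eq: cp} and its restatement Eq.~\eqref{Eq: prob B ACP expand}, $\mathbb{P}(E_k)\geqslant 1-\alpha$. In fact only the lower one-sided event $E_k^{-}\triangleq\{G^{*}\geqslant G-S^{(r)}\}\supseteq E_k$ is needed, and it inherits the bound $\mathbb{P}(E_k^{-})\geqslant \mathbb{P}(E_k)\geqslant 1-\alpha$.

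\textbf{Step 2: inclusion.} Suppose the control $u_k$ is chosen so that $G(x_k,u_k)-S^{(r)}\geqslant 0$. On $E_k^{-}$ we then have
\[
G^{*}(x_k,u_k)\geqslant G(x_k,u_k)-S^{(r)}\geqslant 0,
\]
so $E_k^{-}\subseteq\{G^{*}(x_k,u_k)\geqslant 0\}$. Monotonicity of probability gives $\mathbb{P}(G^{*}(x_k,u_k)\geqslant 0)\geqslant 1-\alpha$. By definition of $G^{*}$, this is exactly the DT-CBF barrier certificate of Lemma~1 for the true dynamics holding with probability at least $1-\alpha$. It therefore enforces the chance constraint in Eq.~\eqref{eq: safe learning opt}.

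\textbf{Step 3: the conditioning issue.} The step I expect to be delicate is that $u_k$ is chosen using $S^{(r)}$ and the learned model, while the probability in Eq.~\eqref{Eq: prob B ACP expand} is over the test-point score. I would argue that the tightened constraint is a deterministic function of quantities known at time $k$. Consequently the inclusion in Step 2 holds pointwise for whatever $(x_k,u_k)$ is selected, and the coverage statement is applied to the realized score $S_k=|G^{*}-G|$ at that pair.

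Under nonexchangeable online data, I would replace the static quantile with the ACP-updated quantile $S_k^{(r)}$ from Eq.~\eqref{eq: ACP update}. In that case the guarantee becomes the long-run average coverage of ACP rather than per-step coverage. I would state this as the interpretation of the probability in the theorem rather than attempt a sharper per-step bound.
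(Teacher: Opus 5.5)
Your proposal is correct and follows the paper's proof. Both read the implication as a sufficiency statement and argue that, on the ACP coverage event of Eq.~\eqref{Eq: prob B ACP expand}, enforcing $-S^{(r)} + G(x_k,u_k) \geqslant 0$ forces $G^{*}(x_k,u_k) \geqslant 0$, so the chance constraint holds with probability at least $1-\alpha$.

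Your Step~3 caveats go beyond what the paper addresses. These are the data-dependent choice of $u_k$, and the fact that ACP only gives long-run average coverage rather than per-step coverage. The paper simply invokes the per-step bound ``by the ACP,'' so your caveats are a more careful interpretation of the guarantee, not a change in the argument.
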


\begin{proof}
    Since the true prediction value $G^*(x_k, u_k)$ is not directly accessible, we use the prediction model $G(x_k, u_k)$ to construct a confidence region which $F^{*}(x_k, u_k)$ belongs to. By the ACP, it holds that $\mathbb{P}( -S^{(r)} + G(x_k, u_k) \leqslant G^{*}(x_k, u_k) \leqslant S^{(r)} + G(x_k, u_k) ) \geqslant 1-\alpha$. Therefore, a sufficient condition for $G^{*}(x_k, u_k)\geqslant 0$ to hold with probability at least $1-\alpha$ is that the lower bound of the interval satisfies $-S^{(r)} + G(x_k, u_k) \geqslant 0$. 
\end{proof}

Theorem \ref{thm} translates the chance constraint into the uncertainty-aware deterministic constraint using ACP, which is simple and efficient because we only need to compute the quantile number $S^{(r)}$. Theorem \ref{thm} is also beneficial for solving the optimization problem (Eq.\eqref{eq: safe learning opt}) because the previous method (Gaussian Process) requires propagating the optimization variable through the kernel function to obtain the covariance matrix, which may render the optimization problem nonlinear. Using Theorem \ref{thm}, we don't need to propagate the optimization variable because $S^{(r)}$ is a constant inside the optimization problem (Eq.\eqref{eq: safe learning opt}).

\section{Algorithms and Analysis}

\subsection{Model Calibration}\label{sec: calibrate model}
In section \ref{sec: control affine d}, the unknown dynamics $d(x_k, u_k)$ is modeled as $d(x_k, u_k)=\omega^\top \phi(x_k, u_k)$, which translates the unknown dynamics into the control-affine form automatically. To avoid overfitting, we will use the ridge regression to obtain the $\overline \omega_t$ under each training episode $t$ as below,

\begin{equation}\label{eq: ridge regression}
    \overline{\omega}_t=\argmin_{\omega} \sum_{i=1}^{t-1} \sum_{k=1}^{K} || \phi^\mathrm{T}(x_k^i, u_k^i) \omega - d^{\epsilon}(x_k^i, u_k^i) ||_2^{2} + \lambda ||\omega||_F
\end{equation}

where the observed unknown dynamics $d^\epsilon$ is obtained by $d^{\epsilon}(x_k^i, u_k^i)=x_{k+1}^i-\hat{f}(x_k)-\hat{g}(x_k)u_k$. $\epsilon$ represents the noise, such as the motion noise. $t$ denotes the total number of training episodes, and $K$ is the number of time steps in the $i$-th training episode. The parameter $\lambda \in \mathbb{R}^{+}$ is a regularization hyperparameter used to prevent overfitting, and $||\omega||_F$ denotes the Frobenius norm of the weight matrix $\omega$.
Solving Eq.\eqref{eq: ridge regression}, the solution for the $\overline{\omega}_t$ can be obtained analytically, which is as below,
\begin{equation}\label{eq: var w}
    \Sigma_t^{\omega} = \Sigma_0 + \sum_{i=1}^{t-1} \sum_{k=1}^K \phi(x_k^i, u_k^i) \phi^\top(x_k^i, u_k^i)
\end{equation}

\begin{equation}\label{eq: mean w}
   \overline{\omega}_t = (\Sigma_t^{\omega})^{-1} \sum_{i=1}^{t-1} \sum_{k=1}^K \phi(x_k^i, u_k^i) d^{\epsilon}
\end{equation}
where $\Sigma_t^{\omega} \in \mathbb{R}^{P \times P}$ denotes the covariance matrix, with the initial covariance matrix given by $\Sigma_0= \lambda I_{P \times P}$. Substituting Eq.\eqref{eq: mean w} into the Eq.\eqref{eq: d}, the unknown dynamics prediction is,
\begin{equation}\label{eq: d prediction}
    d(x_*, u_*) = \underbrace{\overline{\omega}_t^\top \Phi(x_*)}_{\Xi(x_k^i, x_*, u_k^i)} [u^\top_* \quad 1]^{\top}
\end{equation}

where $\overline{\omega}_t$ is adopted from Eq.\eqref{eq: mean w} and $(x_*, u_*)$ is the testing input for robot unknown dynamics. $\Phi(x_*):=[\varphi_1(x_*), \cdots, \varphi_{m+1}(x_*)]$ is adopted where $\varphi_i$ is the RFF defined in Eq.\eqref{eq: RFF}. $\Xi(x_k^i, x_*, u_k^i)$ is constant given the training dataset $(x_k^i, u_k^i)$ and the known robot state input $x_* \in \mathbb{R}^n$ where $i=1,\cdots, t-1$ and $k=1,\cdots, K$. The control input $u_*$ is explicitly decoupled from the unknown dynamics, which is convenient for constructing an optimization problem for safety under CBF and enables an efficient, optimal solution. The details are in section \ref{sec: algorithm framework and analysis}.

\subsection{Efficient Data Collection}
Model-based reinforcement learning is sample-efficient at learning dynamics for a good policy, but it requires balancing exploration and exploitation. Otherwise, the learned model may get stuck in a local minimum, leading to poor reward performance. We adopt $\mathrm{LC}^3$ \cite{kakade2020information} to balance exploration and exploitation when collecting training data (paired robot state and control input) $\tau_i=\{(x_0^{i}, u_0^{i}), \dots,$ $(x_{K-1}^i, u_{K-1}^i), x_{K}^i \}$ under the total time step $K$ at episode $i$. Given the training dataset 
$\{\tau_i\}^{t-1}_{i=1}$,
the weight $\overline{\omega}_t$ and covariance $\Sigma_t^{\omega}$ are obtained via Eq.\eqref{eq: mean w} and Eq.\eqref{eq: var w} respectively. 

The uncertainty of the weight $\overline{\omega}_t$ is parameterized by $\Sigma_t^{\omega}$, which can be considered as a hyper-ball, i.e. $B^t$. To balance exploration and exploitation, Thompson Sampling (TS) for the robot's unknown dynamics $d$ is adopted for each exploration episode, which is $\tilde{\omega}_t \sim \mathcal{N}(\overline{\omega}_t, (\Sigma_t^{\omega})^{-1})$. Thompson Sampling (TS) obtains model weights from the distribution defined by the covariance matrix $\Sigma_t^{\omega}$, such that the sampled weights lie within the confidence ball $B^t$ with probability at least $1 - 2\delta^s$, where $\delta^s \in [0,1]$ denotes the failure probability of sampling within $B^t$. Further details can be found in \cite{luo2022sample}.

Unlike \cite{luo2022sample, kakade2020information} with RFF, this paper considers control-affine RFF for modeling robot dynamics, which has no intrinsic model error (control-affine form) for modeling the unknown dynamics. Although \cite{zhou2024safety} adopts the Quadrature Fourier Features (QFF) to model the state-dependent dynamics, it cannot explicitly decouple the control input into the control-affine form, whereas this paper can learn control-affine dynamics via control-affine RFF, which can facilitate solving the optimization problem under CBF. 

\subsection{Algorithm Framework and Analysis}\label{sec: algorithm framework and analysis}
Safe model-based reinforcement learning in Eq.\eqref{eq: safe learning opt} requires learning the unknown dynamics and generating the safe policy simultaneously, which is intractable in practice \cite{kakade2020information}. Thus, work in \cite{kakade2020information, luo2022sample, zhou2024safety} approximated the optimization problem (Eq.\eqref{eq: safe learning opt}) in two steps: learning $d(x_k, u_k)$ while obtaining optimal policy and obtaining safe policy, which is presented in Algorithm \ref{alg: algorithm}.

\textbf{Learning unknown dynamics while obtaining optimal policy.}
At episode $t$, we use the collected data to obtain $\overline{\omega}_t$ and then embed the sampled dynamics via Thompson Sampling into the Model Predictive Path Integral (MPPI) to balance exploration and exploitation. MPPI will generate the optimal policy $\pi^*$ using the sampled dynamics, which approximates Eq.\eqref{eq: learning objective}.

\textbf{Obtaining safe policy.} Let $u^{\mathrm{ref}}:= \bar{u}_k^t$ denote the nominal control input generated by MPPI. The corresponding safe control policy is then obtained by solving the following optimization problem,
\begin{align}\label{eq: CBF-ACP opt}
    &u^t_k = \argmin_{u_k^t} || u_k^t - u^{\mathrm{ref}}_k  ||^2 \notag \\ 
    \quad s.t.  \quad &h(x_{k+1}^t)-h(x_k^t)\geqslant-\gamma h(x_k^t)+S^{(r)} \quad \\
    &u_k^t\in[u_{\mathrm{min}}, u_{\mathrm{max}}] \notag
    \end{align}
where $x_{k+1}^t=\hat{f}(x_k^t)+\hat{g}(x_k^t)u_k^t+\overline{\omega}_t^\top \Phi(x_k^t)\begin{bmatrix} u_k^t \\1 \end{bmatrix}$. $u_{\mathrm{min}}$ and $u_{\mathrm{max}}$ are the maximum and minimum control limits allowed, respectively. The first constraint is from the uncertainty-aware safety barrier certificates from Eq.~\eqref{eq: prob B to B}.
    
Due to the employed control-affine RFF, the optimization problem (Eq.~\eqref{eq: CBF-ACP opt}) can be conveniently formulated using off-the-shelf tools, such as CVXPY\cite{diamond2016cvxpy}, because we don't need to propagate the optimization variable inside the data-driven model. Without considering the control-affine structure, Eq.~\eqref{eq: CBF-ACP opt} requires the propagation of the optimization variable inside the data-driven model, e.g. Gaussian Process, which leads Eq.~\eqref{eq: CBF-ACP opt} to be a nonlinear optimization that may not find the optimal solution efficiently.

\begin{algorithm}[t]
	\caption{SafeCARL: Safe reinforcement learning using control-affine dynamics}\label{alg: algorithm}
	\KwIn{Parameters: ACP failure probability $\alpha$ and learning rate $\delta$, $\gamma$ in CBF, number of features $P$, stage cost function $r$, initial state $x_0$ and the goal state $x_{\mathrm{goal}}$, training episode $T$, total time steps $K$ under each episodes}
	\KwOut{Control-affine $d(x_k, u_k)$ and $u^{\mathrm{safe}}$}
	\BlankLine
    \BlankLine
	\While{$t \leq T$}{
		Initialization $x_0^t \gets x_0 $ \;
		Sample $\tilde{\omega}_t \sim \mathcal{N}(\overline{\omega}_t, (\Sigma^{\omega}_t)^{-1})$ ; //  Exploration with Thompson Sampling\\
		\While{$k < K$}{
			$\pi_k^t \gets \argmin_{\pi} J^{\pi} (x_k^t;R, \tilde{\omega}_t)$ ; // $u^{\mathrm{reference}}$ by MPPI\\
            
			$u_k^t \gets \textnormal{Eq}.~\eqref{eq: CBF-ACP opt}$ \;
            $x_{k+1}^t \gets f(x_k^t) + g(x_k^t)u_k^t + \epsilon_k^t$ \;
			$d^{\epsilon}(x_k^t, u_k^t) \gets$ $x_{k+1}^t- (\hat{f}(x_k^t) + \hat{g}(x_k^t)u_k^t)$ \;
		}
		$\Sigma_{t+1}^{\omega} \gets \textnormal{Eq}.~\eqref{eq: var w}$ \;
        $\overline{\omega}_{t+1} \gets \textnormal{Eq}.~\eqref{eq: mean w}$ \;
	}
\end{algorithm}

\section{Results}
The simulations on the cartpole and 3D quadrotor are conducted to validate the sample efficiency, performance, and the safety guarantees under the proposed safe reinforcement learning framework.
To assess sample efficiency and performance, we compare against $\mathrm{LC}^{3}$ \cite{kakade2020information}, which employs RFF, whereas our method utilizes control-affine RFF (ARFF). To demonstrate the effectiveness of the proposed efficient uncertainty-aware safety constraint, we perform an ablation study under three settings: without the safety constraint (MPPI-ARFF), with the safety constraint (MPPI-ARFF-CBF), and with the uncertainty-aware safety constraint (our method). All algorithm parameters are kept identical unless otherwise noted. 

\subsection{Cartpole}
We use the cartpole dynamics from Gymnasium \cite{towers2024gymnasium}. The ground-truth cartpole dynamics can be written as,

\begin{equation}\label{eq: cartpole dynamics}
    \begin{aligned}
    \ddot{\theta} &=
\frac{G_r \sin\theta - M \cos \theta}
{l \left(\frac{4}{3} - \frac{m_p \cos^2\theta}{m_c+m_p} \right)} \\
\ddot{p} &= M - \frac{m_p l \ddot{\theta} \cos{\theta}}{m_c + m_p}
\end{aligned}
\end{equation}
where $M=\frac{u + m_p l \dot{\theta}^2 \sin{\theta}}{m_c+m_p}$. $G_r$ is the gravitational acceleration. $\theta\in\mathbb{R}$ is the angle of the pole, $p\in \mathbb{R}$ is the position of the cartpole, and $u\in \mathbb{R}$ is the control input. The robot state $x$ is defined as $x=[p, \dot{p}, \theta, \dot{\theta}]$.  $m_c=1.0$, $m_p=0.1$, and $l=0.5$ are the ground-truth parameter for simulation. In this simulation, we inject unknown dynamics by using inaccurate parameter values:  $m_c^{\mathrm{nom}}=1.5$, $m_p^{\mathrm{nom}}=0.05$, and $l^{\mathrm{nom}}=0.4$.

\textbf{Simulation setting.} The control task is to keep the cartpole upright ($x_{\mathrm{goal}}=[0, 0, 180^\circ, 0]$) via learning the unknown dynamics. The MPPI with the CBF is applied to the robot to ensure safety during exploration, as described in the Algorithm \ref{alg: algorithm}. The stage reward function $r$ for controlling the cartpole is $R = ||x-x_{\mathrm{goal}}||^2_Q + ||u||_{ R_c}^2$ where $Q=\mathrm{diag}(5, 0.1, 10, 0.1)$ and $R_c=\mathrm{diag}(0.01)$. The motion noise is $\epsilon \sim 0.001*\mathcal{N}(\textbf{0}_4, \textbf{I}_{4 \times 4})$.
The following MPPI hyperparameters are used throughout the experiments: a planning horizon of 50, 500 planning samples, a control variance of $5^2$, and a temperature parameter of $1.0$.

To safely learn the unknown dynamics of the cartpole system, we use $P=100$ features for both RFF and control-affine RFF (ARFF). The model training parameter is the same for fairness. The safety set is defined as $\mathcal{C}=\{[p, \dot{p}]| 1-\frac{p^2}{2.5^2} - \frac{\dot{p}^2}{3.0^2} \geqslant 0\}$. The CBF hyperparameter is set as $\gamma = 0.7$, and the ACP uses a failure probability of $\alpha = 0.02$. The simulation is repeated 100 times to obtain quantitative results for both safety and reward. The computation cost for solving Eq.~\eqref{eq: CBF-ACP opt} is presented in Table \ref{table: computation cost}.

\textbf{Reward and explore efficiency.}
To evaluate the sample efficiency and the performance of the proposed method, the MPPI with the ground-truth dynamics (MPPI-GT), the $\mathrm{LC}^3$ algorithms\cite{kakade2020information} under the random Fourier feature (RFF)  (MPPI-RFF), and our method without uncertainty-aware safety constraint (MPPI-ARFF) are applied for learning the unknown cartpole dynamics. Simulation results in Figure \ref{Fig: cartpole}(a) show that the reward achieved by our method is comparable to that of MPPI-GT and outperforms MPPI-RFF. On the other hand, our method converges in approximately 10 episodes, demonstrating its sample efficiency.

\textbf{Ablation study for safety.} 
We conduct experiments with MPPI-ARFF, MPPI-ARFF using only the CBF (MPPI-ARFF-CBF), and MPPI-ARFF with the uncertainty-aware CBF (i.e., MPPI-ARFF-CBF-ACP (our method), denoted as SafeCARL) to evaluate the effectiveness of the proposed uncertainty-aware safety module.

The results in Figure \ref{Fig: cartpole}(b) demonstrate that our framework can guarantee safety when learning the unknown dynamics, while MPPI-ARFF and MPPI-ARFF-CBF without considering the uncertainty can not guarantee safety. Additionally, we perform 100 experiments to empirically estimate the safety probability. As shown in Table \ref{table: safety}, our framework maintains safety within the predefined failure probability ($\alpha=0.02$).

\textbf{Exploration validation.} For learning the unknown dynamics, the robot must explore the environment both safely and efficiently. Figure \ref{Fig: cartpole}(c) illustrates the exploration trajectory of our proposed method, demonstrating that it effectively explores the environment while ensuring safety.

\begin{figure*}[t]
	\centering
	\subfigure[]
	{
		\centering
		\includegraphics[scale=1, width=0.31\linewidth]{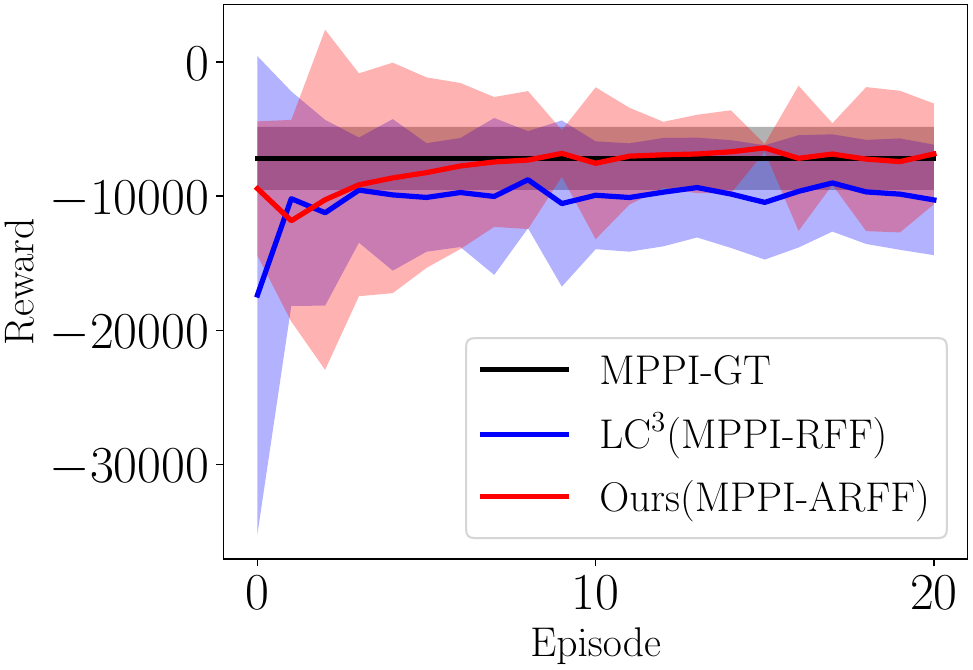}
	}
	\subfigure[]
	{
		\centering
		\includegraphics[scale=1, width=0.31\linewidth]{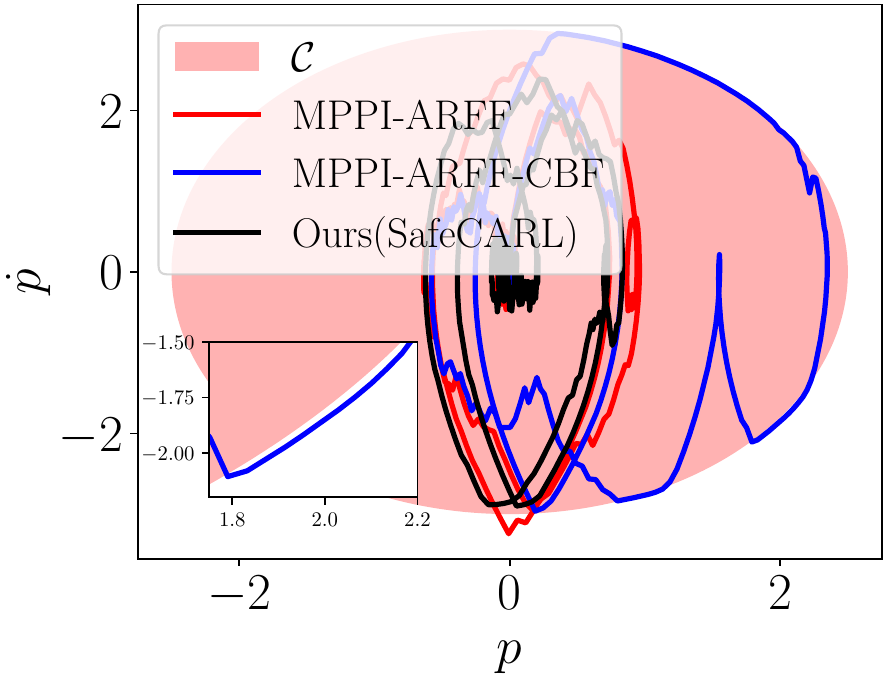}
	}
	\subfigure[]
	{
		\centering
		\includegraphics[scale=1, width=0.31\linewidth]{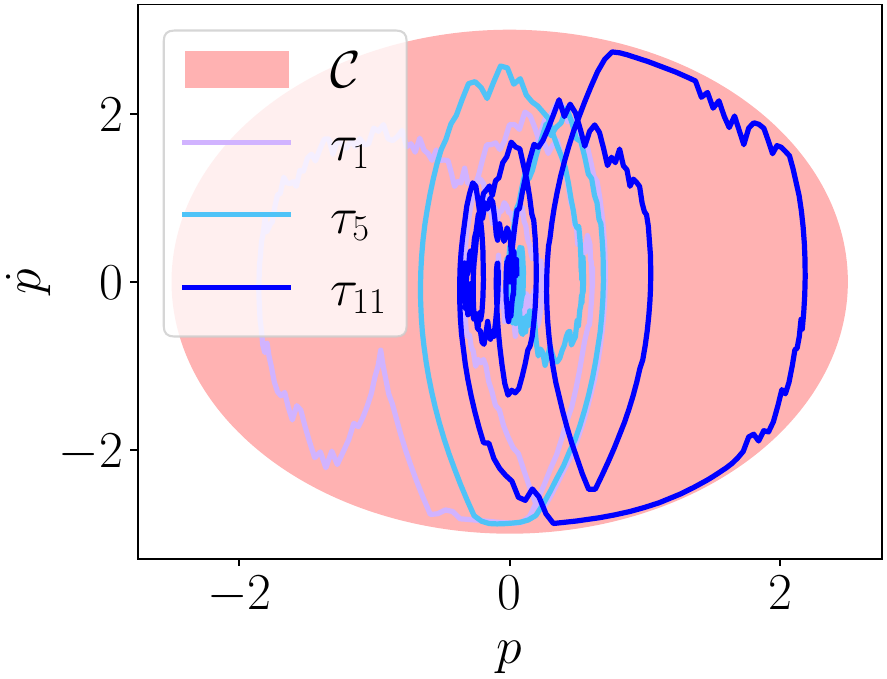}
	}
	\caption{\textbf{Cartpole}. (a)The reward under different algorithms. MPPI-GT is the reward from the ground truth dynamics while MPPI-ARFF is our method without the uncertainty-aware safety constraint. (b)Ablation study for the proposed framework. The start is $[0,0,180^\circ,0]$ and the goal is the $[0,0,0^\circ,0]$. $\mathcal{C}$ is the safety set. Safety cannot be guaranteed without an uncertainty-aware safety constraint under CBF.  (c)The safe exploration trajectory is generated from our method(SafeCARL), which can explore the environment efficiently while enforcing safety.}
	\label{Fig: cartpole}
    \vspace{-0.5 cm}
\end{figure*}

\subsection{3D Quadrotor Stabilization}\label{sec: quadrotor stabilization}

To validate the proposed method across different platforms, we use the 3D Quadrotor dynamics with differential flatness from \cite{wang2018safe} in this simulation.
\begin{equation}\label{eq: uav dynamics}
\left\{
\begin{aligned}
\ddot{p} &= G_r \mathbf{z}_\mathrm{w} + \mathrm{R}_{q}\, \mathbf{z}_{\mathrm{w}} f_z / m, \\[6pt]
\begin{bmatrix}
\dot{\psi} \\
\dot{\theta} \\
\dot{\Psi}
\end{bmatrix}
&=
\begin{bmatrix}
1 & s\psi \, t\theta & c\psi \, t\theta \\
0 & c\psi            & -s\psi \\
0 & s\psi \, sc\theta & c\psi \, sc\theta
\end{bmatrix}
\boldsymbol{w}_q.
\end{aligned}
\right.
\end{equation}

where $p\in \mathbb{R}^3$ is the position of the quadrotor in 3D space, $G_r$ is the gravitational acceleration, and $m$ is the quadrotor center mass. The ground truth for $m$ is $m=1.0$. $\mathbf{z}_\mathrm{w}=[0,0,1]^\top$ is adopted while $\mathrm{R}_q$ denotes the rotation matrix that transforms vectors from the body frame to the world frame, which can be found in \cite{wang2018safe}.  $\psi$(roll), $\theta$(pitch), and $\Psi$(yaw) are the  Euler angles to describe the orientation of the quadrotor w.r.t. the world frame. $s$, $c$, and $t$ represent the $\sin$, $\cos$, and $\tan$, respectively. While $sc$ represents the $\sec$. The robot state $x\in\mathbb{R}^9$ is $x=[p, \dot{p}, \psi, \theta, \Psi]$. Control input $\boldsymbol{w}_q\in \mathbb{R}^3$ is the angular velocity from the motor described under the world frame and control input $f_z\in\mathbb{R}$ is the thrust, i.e. $u=[\boldsymbol{w}_q, f_z]$.

\textbf{Simulation setting.}
In this simulation, we assume the unknown dynamics arise from inaccurate parameter values and environmental wind. Accordingly, the nominal mass is set to $m_\mathrm{nom}=0.8$, and the wind is modeled as $\sin{p}$. The nominal dynamics for the translation part in Eq.\eqref{eq: uav dynamics} becomes $\ddot{p} = G_r \mathbf{z}_\mathrm{w} + R_q \mathbf{z}_w f_z / m_{\mathrm{nom}}$. Followed by \cite{wang2018safe}, the safe reinforcement learning task for the quadrotor is to let the quadrotor reach the goal position, i.e. $p_\mathrm{goal}=[1.5, 2.0, 3.0]$, while satisfying the safety constraint $\mathcal{C}=\{[p_z, \dot{p}_z]| 1-\frac{p_{z}^2}{3.2^2} - \frac{\dot{p}^2_{z}}{3.5^2} \geqslant 0\}$ where $p_{z}$ and $\dot{p}_{z}$ represent the position and velocity of z axis, respectively. The motion noise is modeled as $\epsilon \sim 0.001*\mathcal{N}(\textbf{0}_9, \textbf{I}_{9 \times 9})$.

MPPI with the uncertainty-aware safety constraint is applied to achieve safe exploration. The stage cost function $R = ||x-x_{\mathrm{goal}}||^2_Q + ||u||_{ R_c}^2$ where $x_\mathrm{goal}=[p_\mathrm{goal}, \textbf{0}_6]$, $Q=\mathrm{diag}(10, 10, 10, 1, 1, 1, 1, 1, 1)$ and $R_c=0.01*I_{4\times4}$. 

The MPPI hyperparameters are used throughout the experiments: a planning horizon of 30, 500 planning samples, a control variance of $\mathrm{diag}(0.5, 0.5, 0.5, 2.0)$, and a temperature parameter of $1.0$.

We set $P=100$ for both control-affine RFF and RFF, and the model training parameter is the same for fairness. $\gamma=0.7$ is applied to CBF and failure probability $\alpha=0.02$ are adopted. The computation cost for solving Eq.~\eqref{eq: CBF-ACP opt} is presented in Table \ref{table: computation cost}.

\begin{table}[t]
\centering
\caption{Safety Evaluation: Minimal $h$ across Training Episodes}\label{table: safety}
\small
\setlength{\tabcolsep}{3pt}
\renewcommand{\arraystretch}{1.05}
\begin{adjustbox}{max width=\columnwidth}
\begin{tabular}{c |c |c |c |c}
\hline
 Method&  & $\mathrm{LC}^3$\cite{kakade2020information} & MPPI-ARFF-CBF & Ours(SafeCARL) \\
\hline
\multirow{2}{*}{Cartpole} & safe\%  & 0\% & 19\% & 100\% \\
\cline{2-5}
                          & min h & -1.69 $\pm$ 1.18  & -0.013 $\pm$ 0.015 &  0.042 $\pm$ 0.014 \\
\hline
\multirow{2}{*}{UAV}      & safe\%  & 0\% & 0\% & 98\% \\
\cline{2-5}
                          & min h & -0.26 $\pm$ 0.03 & -0.029 $\pm$ 0.006  &  0.01 $\pm$ 0.026\\
\hline
\end{tabular}
\end{adjustbox}
\vspace{0.5em}

{\footnotesize
    \parbox{\linewidth}{We conduct 100 experiments to evaluate different algorithms. MPPI-ARFF-CBF represents our proposed framework without uncertainty quantification. For each experiment, we record the minimal value of $h(x)$ across all training episodes. Safety is assessed by $h(x) > 0$, and the corresponding safe rate over the 100 experiments is summarized in the table. The failure probability under adaptive conformal prediction is set to $\alpha = 0.02$.}
    }
    \vspace{-0.6 cm}
\end{table}

\textbf{Reward and explore efficiency.} Similar to the setting of the cartpole under reward and exploration efficiency, we compare our method without uncertainty-aware safety constraint (MPPI-ARFF) against MPPI-GT and MPPI-RFF. As shown in Figure \ref{Fig: uav}(a), our approach using control-affine RFF outperforms MPPI-RFF and demonstrates high sample efficiency, achieving fast convergence.

\textbf{Ablation study for safety.} Similar to the setting of the cartpole under ablation study, we compare the method under the MPPI-ARFF, MPPI-ARFF-CBF, and MPPI-ARFF-CBF-ACP (our method named as SafeCARL). Figure \ref{Fig: uav}(b) demonstrates that the proposed method can guarantee safety when learning the unknown dynamics for a quadrotor.

\textbf{Applying the learned dynamics.} To leverage the learned dynamics for controlling the 3D quadrotor, we test our method under different initial positions toward the same goal. The resulting quadrotor trajectories under our method are shown in Figure \ref{Fig: uav}(c), demonstrating that our method successfully reaches the goal while maintaining safety.

\begin{figure*}[ht]
	\centering
	\subfigure[]
	{
		\centering
		\includegraphics[scale=1, width=0.31\linewidth]{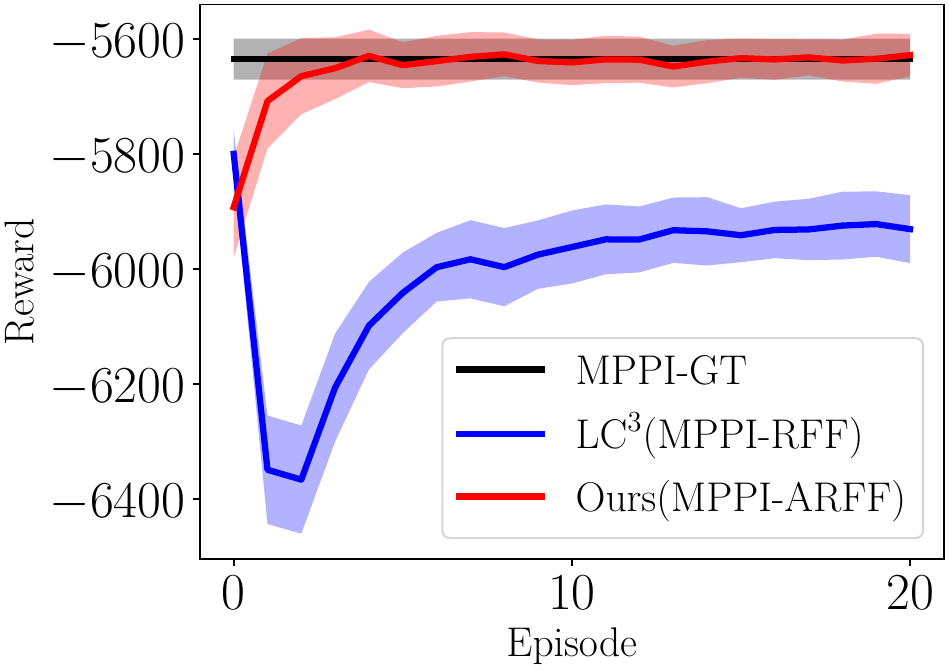}
	}
	\subfigure[]
	{
		\centering
		\includegraphics[scale=1, width=0.31\linewidth]{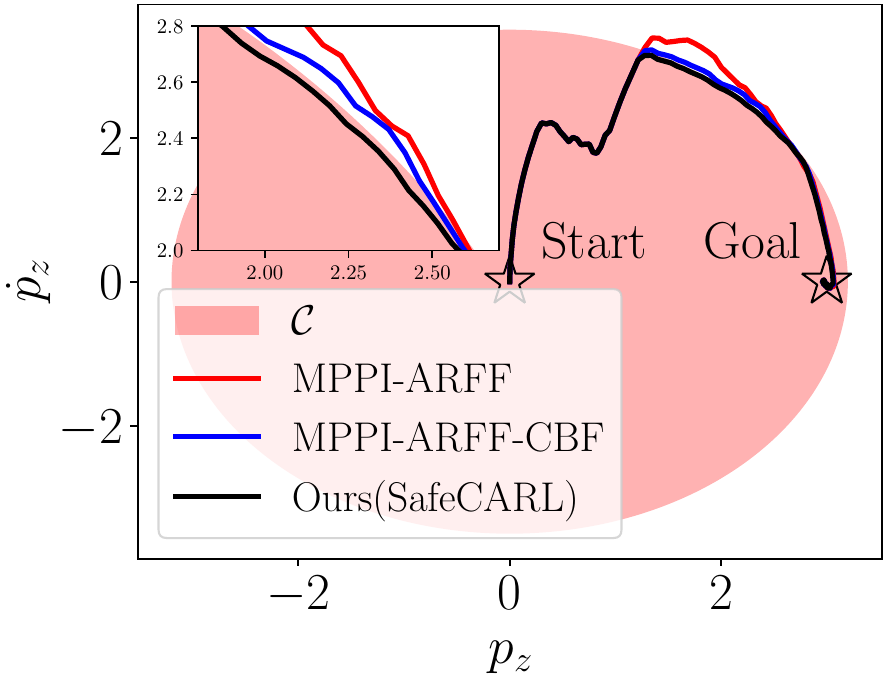}
	}
	\subfigure[]
	{
		\centering
		\includegraphics[scale=1, width=0.31\linewidth]{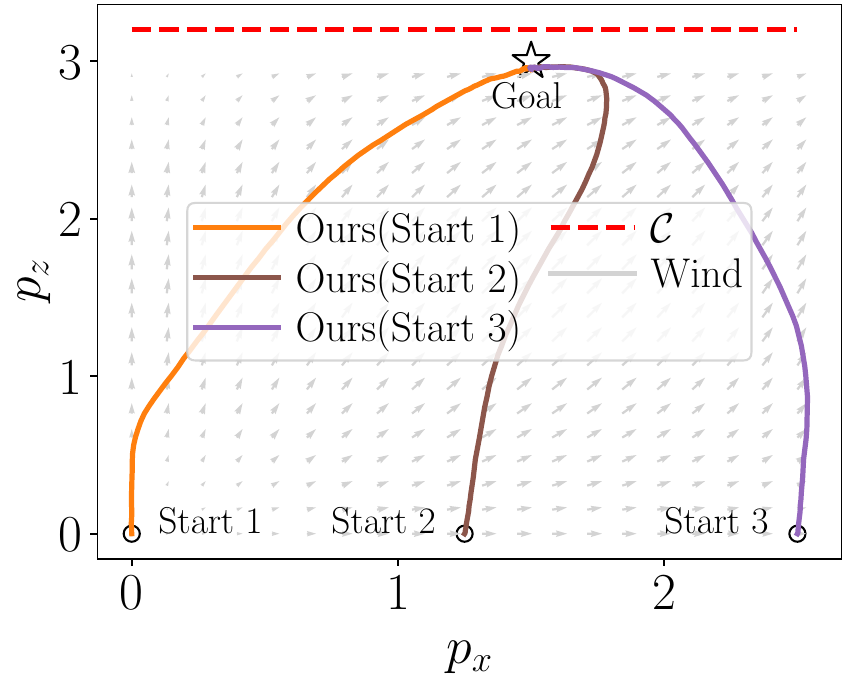}
	}
	\caption{\textbf{3D Quadrotor}. (a)The reward under different algorithms. MPPI-GT is the reward from the ground truth dynamics, while MPPI-ARFF is our method without the uncertainty-aware safety constraint. (b)Ablation study for the proposed framework. The start is the origin, and the target position is $[1.5, 2.0, 3.0]$. $\mathcal{C}$ is the safety set. Safety cannot be guaranteed without an uncertainty-aware safety constraint under CBF.  (c)We test the learned dynamics from different initial positions while continuing to apply the uncertainty-aware safety constraint to ensure safety. The resulting trajectories demonstrate that our method can generalize to various starting points while maintaining safety.}
	\label{Fig: uav}
    \vspace{-0.5 cm}
\end{figure*}

\begin{figure*}[t]
	\centering
	\subfigure[]
	{
		\centering
		\includegraphics[scale=1, width=0.31\linewidth]{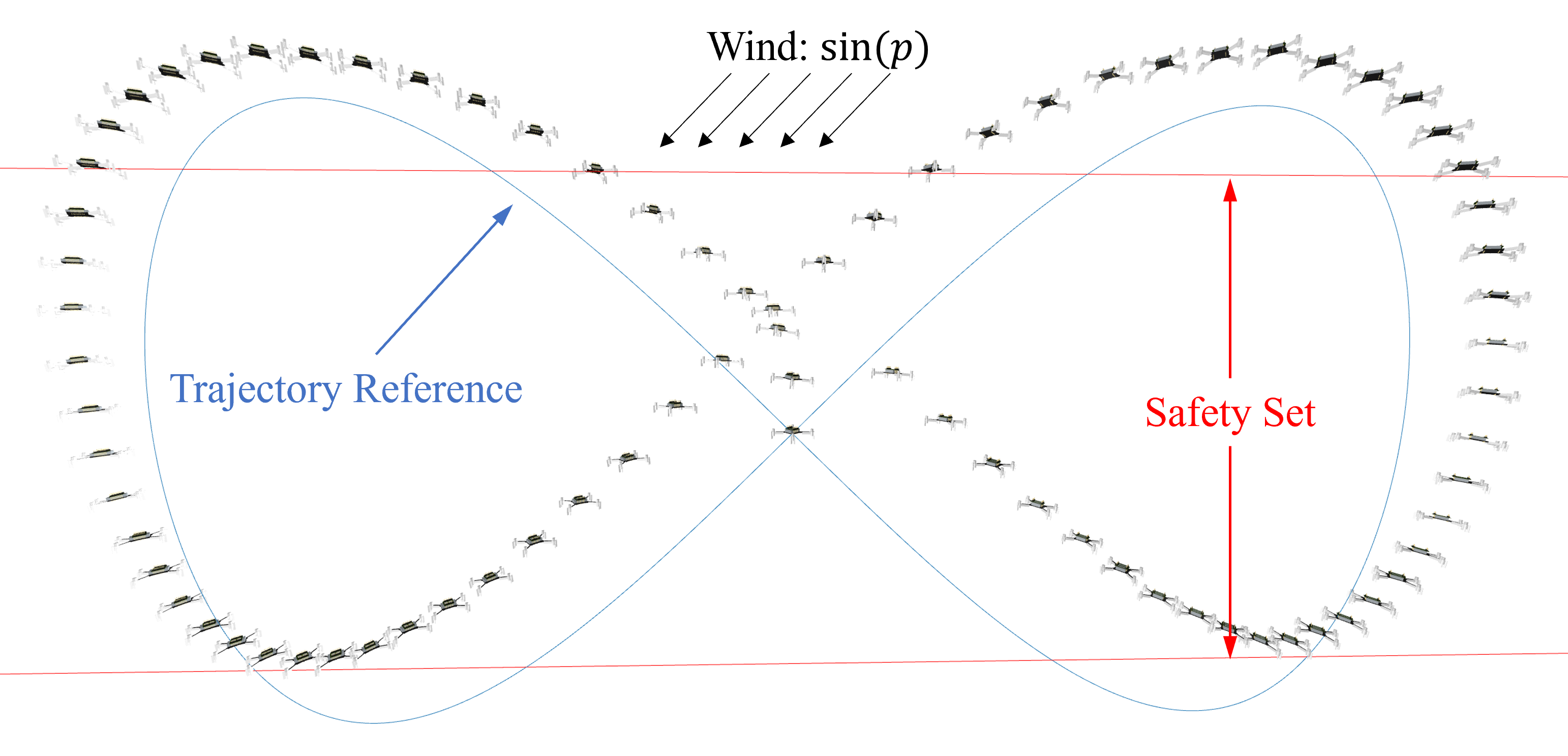}
	}
	\subfigure[]
	{
		\centering
		\includegraphics[scale=1, width=0.31\linewidth]{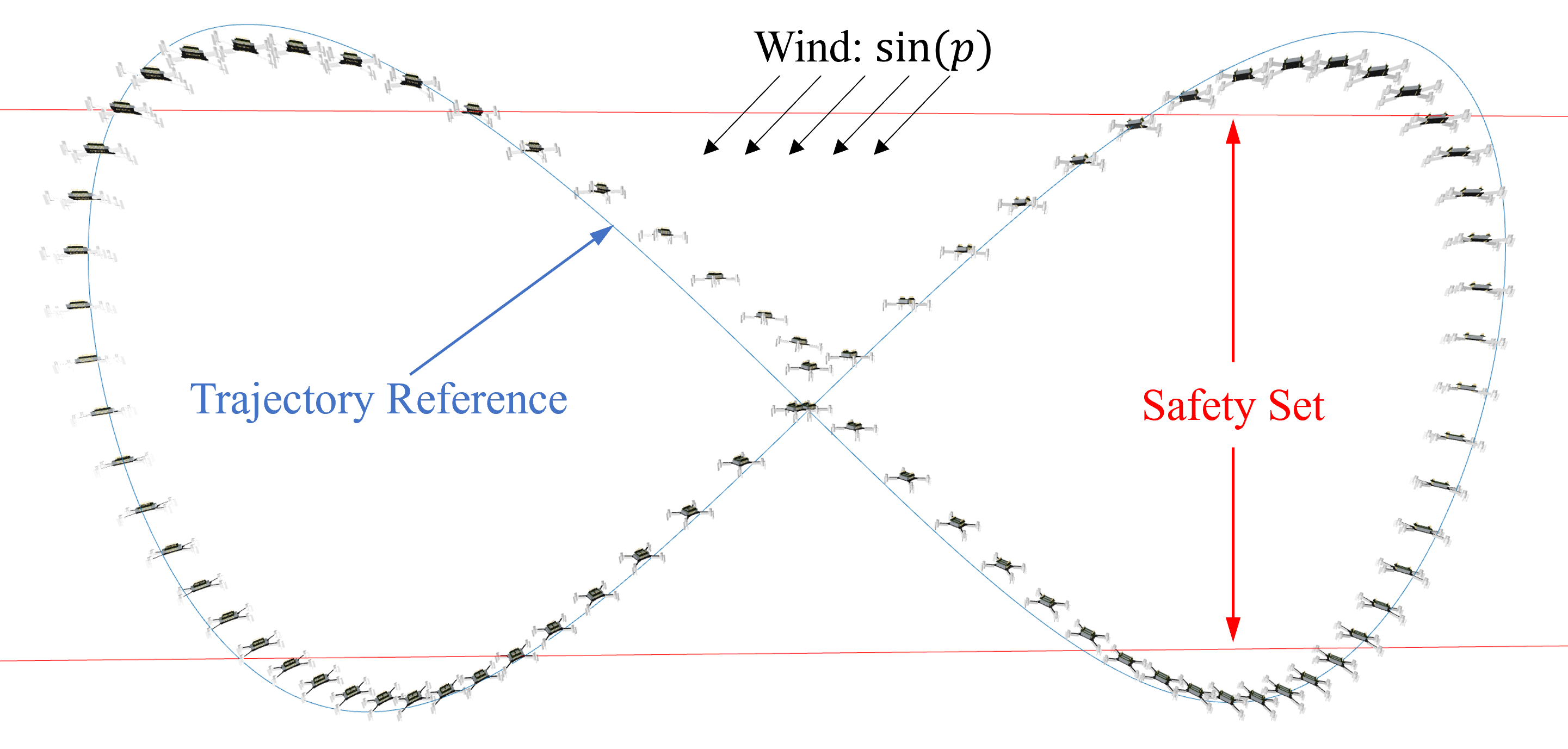}
	}
	\subfigure[]
	{
		\centering
		\includegraphics[scale=1, width=0.31\linewidth]{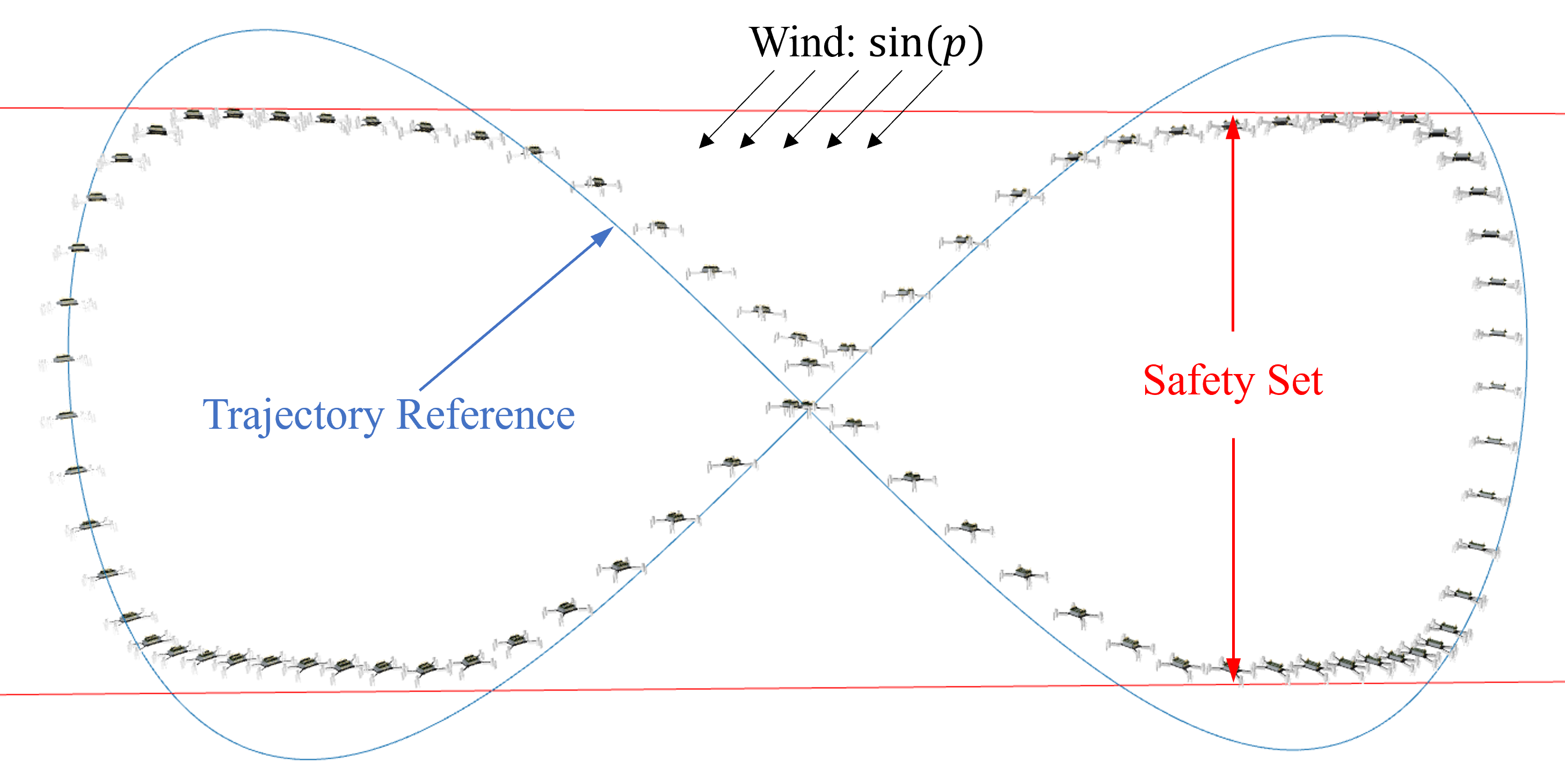}
	}
	\caption{\textbf{3D Quadrotor tracking.} The control task is to track the blue figure-eight trajectory while learning the unknown dynamics. All the trajectories of the result are from the final learning episode. (a) $\mathrm{LC}^3$\cite{kakade2020information} applies RFF to learn the unknown dynamics. (b) Our method under the ARFF without safety constraints. (c) Our method (SafeCARL). SafeCARL not only enforces safety but also achieves better track performance than $\mathrm{LC}^3$.}
	\label{Fig: uav_track}
    \vspace{-0.5 cm}
\end{figure*}

\begin{table}[h]
\centering
\caption{Computation Evaluation of ~Eq.\eqref{eq: CBF-ACP opt}}\label{table: computation cost}
\begin{tabular}{c |c |c }
\hline
 Method& MPPI-RFF-CBF-ACP & Ours(SafeCARL) \\
\hline
\multirow{1}{*}{Cartpole} & 2.51 $\pm$ 1.05 $\mathrm{ms}$ & 0.38 $\pm$ 0.16 $\mathrm{ms}$ \\
\hline
\multirow{1}{*}{UAV}      & 3.27 $\pm$ 1.66 $\mathrm{ms}$ & 0.44$\pm$ 0.08 $\mathrm{ms}$ \\
\hline
\end{tabular}
\vspace{0.5em}

{\footnotesize
    \parbox{\linewidth}{
    To evaluate the computational efficiency of SafeCARL against the method under RFF for dynamics learning, we measured the average execution time per learning episode required to solve the optimization problem in Eq.\eqref{eq: CBF-ACP opt}. Benchmarked on an AMD Ryzen 7960X 24-Core processor, SafeCARL leverages a convex formulation solved via CVXPY \cite{diamond2016cvxpy}, yielding approximately an eightfold speedup over the RFF-based method, which relies on the IPOPT solver \cite{wachter2006implementation}. Furthermore, in addition to its superior computational efficiency, SafeCARL empirically demonstrates enhanced control performance relative to the RFF-based baseline.
    }
    }
    \vspace{-0.5cm}
\end{table}

\subsection{3D Quadrotor Tracking}
We also adopt the same simulation settings as the 3D quadrotor stabilization to track the figure-eight trajectory shown in Figure \ref{Fig: uav_track}. The safety set is defined as $\mathcal{C}=\{[p_z, \dot{p}_z]| 1-\frac{(p_{z}-2)^4}{0.6^4} - \frac{\dot{p}^4_{z}}{1.8^4} \geqslant 0\}$. The control task is to track the figure-eight trajectory under unknown dynamics, which include uncertainties in the model parameters (i.e., mass) and environmental wind, assumed to be the same as in Section \ref{sec: quadrotor stabilization}.

We compare our method (SafeCARL) with the $\mathrm{LC}^3$ \cite{kakade2020information}. The simulation results in Figure \ref{Fig: uav_track} show that our method achieves higher tracking accuracy than $\mathrm{LC}^3$. This improvement arises from explicitly accounting for control-affine dynamics, which reduces potential model bias in the unknown dynamics and helps prevent overfitting. In addition to enhanced performance, our method ensures safety, whereas $\mathrm{LC}^3$ cannot.

\section{Conclusion}
This paper introduced a novel safe RL framework that incorporated a control-affine dynamics approximation method to model unknown dynamics amenable for safey control synthesis and used adaptive conformal prediction to derive an uncertainty-aware safety constraint. By integrating the constraints into an optimism-based exploration process, we allow for safe and efficient model learning while maximizing the control task efficiency.Simulation under complex dynamics validates the effectiveness of the proposed method. Future work will apply the proposed framework to physical robotic platforms in the real world.

\bibliographystyle{IEEEtran}
\bibliography{IEEEabrv,ref}

\end{document}